\documentclass[journal]{IEEEtran}
\usepackage{ifpdf}
\usepackage[pdftex]{graphicx}
\usepackage[cmex10]{amsmath}
\usepackage{graphicx}
\usepackage{amsmath}
\usepackage{amssymb}
\usepackage{subfigure}
\usepackage{epstopdf}
\usepackage{multirow}
\usepackage{array}
\usepackage{booktabs}
\usepackage{stfloats}
\usepackage{amsmath,amsthm}
%

%
\ifCLASSINFOpdf
\else
\fi


\begin{document}
%
\title{Local Color Contrastive Descriptor for Image Classification}
%
%
%

\author{Sheng~Guo,~\IEEEmembership{ Student~Member,~IEEE,}
        Weilin~Huang,~\IEEEmembership{Member,~IEEE,}
        and~Yu~Qiao,~\IEEEmembership{Senior~Member,~IEEE}
\thanks{Sheng Guo, Weilin Huang and Yu Qiao are with Shenzhen Institute of Advanced Technology, Chinese Academy of Sciences, Shenzhen, China,
and Multimedia Laboratory, Chinese University of Hong Kong, Hong Kong.

 E-mail: \{sheng.guo,wl.huang,yu.qiao\}@siat.ac.cn.}}

\maketitle

\begin{abstract}
Image representation and classification are two fundamental tasks towards multimedia content retrieval and understanding. The idea that shape and texture information (e.g. edge or orientation) are the key features for visual representation is ingrained and dominated in current multimedia and computer vision communities. A number of low-level features have been proposed by computing local gradients (e.g. SIFT, LBP and HOG), and have achieved great successes on numerous multimedia applications. In this paper, we present a simple yet efficient local descriptor for image classification, referred as Local Color Contrastive Descriptor (LCCD), by leveraging the neural mechanisms of color contrast. The idea originates from the observation in neural science that color and shape information are linked inextricably in visual cortical processing. The color contrast yields key information for visual color perception and provides strong linkage between color and shape. We propose a novel contrastive mechanism to compute the color contrast in both spatial location and multiple channels.   The color contrast is computed by measuring  \emph{f}-divergence between the color distributions of two regions. Our descriptor enriches local image representation with both color and contrast information. We verified experimentally that it can compensate strongly for the shape based descriptor (e.g. SIFT), while keeping computationally simple. Extensive experimental results on image classification show that our descriptor improves the performance of SIFT substantially by combinations, and achieves the state-of-the-art performance on three challenging benchmark datasets. It improves recent Deep Learning model (DeCAF) \cite{decaf2013} largely from the accuracy of $40.94\%$ to  $49.68\%$ in the large scale SUN397 database. Codes for the LCCD will be available.
\end{abstract}

\begin{IEEEkeywords}
Shape information, color contractive descriptor, \emph{f}-divergence, multiple color channels.
\end{IEEEkeywords}

%
\IEEEpeerreviewmaketitle

\section{Introduction}
Image representation has long been an active yet challenging topic in multimedia community. It is a fundamental task for image content understanding, and plays a crucial role on the success of numerous image/video related applications, such as image categorization \cite{sadeghi2012latent,parizi2012reconfigurable,wu2011centrist,gao2014concurrent}, object detection/recogntion \cite{nister2006scalable,van2011segmentation,lei2011face}, action recognition \cite{tahir2013robust,wang2014semi}, and image segmentation \cite{arbelaez2011contour,ren2013image,taylor2013towards}.  For the last two decades, a large amount of research efforts have been devoted to
designing an efficient local descriptor for image/video representation. The state-of-the-art descriptors are mostly based on shape descriptions, such as edges, corners or gradient orientations, while discarding color information. Typical examples along this line include  Scale Invariant Feature Transform (SIFT) \cite{1}, Local Binary Pattern Descriptor (LBP) \cite{2, Huang2015}, Histogram of Orientated Gradient (HOG) \cite{4}, Region Covariance Descriptor (RCD) \cite{Tuzel2006, Huang2013}, and PixNet visual features \cite{pourian2015pixnet}. They have been widely applied for numerous multimedia and image/vision applications with great success achieved. Most of these descriptors are designed for gray images. Obviously, the concept that local shape or gradient information are the key features for image representation is ingrained and dominated. It makes sense in the way that people can easily understand the contents or actions in a black and white movie, like a Charlie Chaplin film, without knowing its true color.

\begin{figure*}
\centering
\includegraphics[height=2in,width=4.8in,angle=0]{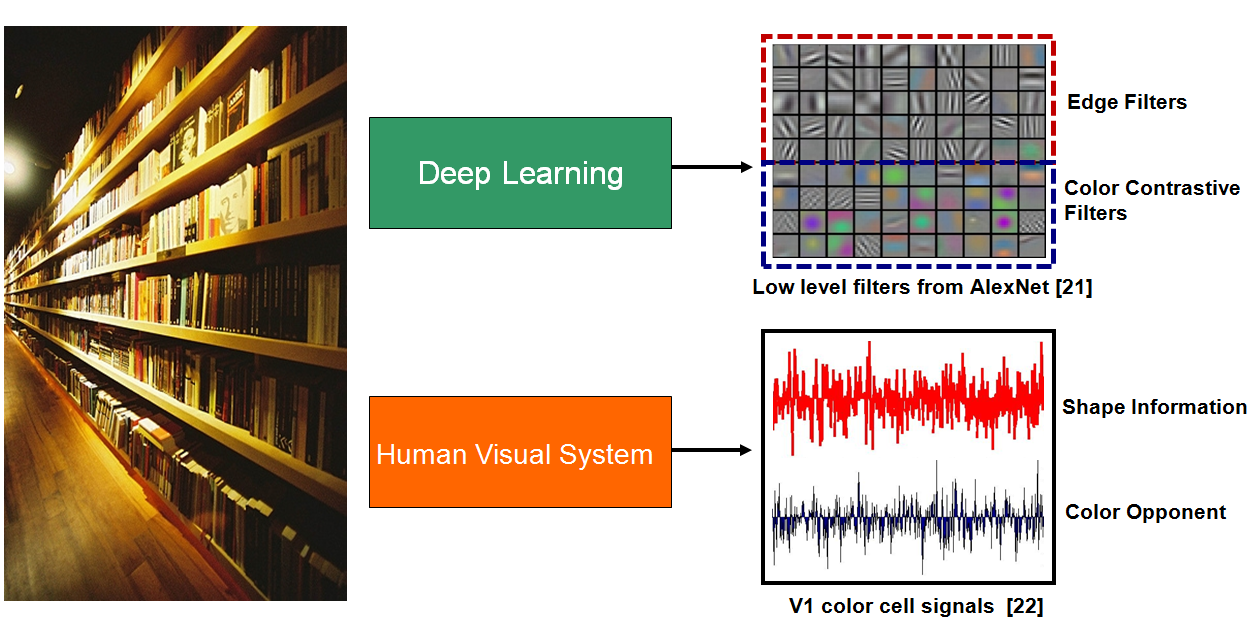}
\caption{Top: the low-level filters (from the first convolutional layer) learned by Deep Convolutional Neural Network \cite{Alex2012imagenet}. Bottom: shape and opponent information in human visual system \cite{yamins2014performance}. The low-level filters of the deep CNN contain both edge/shape filters and color constrictive information, which are highly consistent with the shape and opponent information in human visual system. Both of them validate our clam theocratically that color contrast plays a crucial role in image description.}
\label{fig:12}
\end{figure*}
 However, recent observations from visual neuroscience indicate that shape/form information is not the only visual property of objects and surfaces, but rather color and shape are inextricably linked as properties of scene in visual perception and visual cortical processing \cite{31}\cite{32}\cite{33}. In order to account for color information for image representation, Mindru \textit{et.al.} \cite{mindru2004moment} proposed a combined color and shape feature of local patch based on color moments, which is shown to be invariant to illuminance changes. In \cite{34}, the description of local feature is extended with color information in an effort to increase its robustness against photometric changes and varying image quality. To increase the photometric invariance and discriminative power, a number of color descriptors based on both color histogram and SIFT are systemically reviewed and evaluated in \cite{35}, including the color SIFT. However, the application of color information  for image description has received relatively much less attentions in multimedia research \cite{34}\cite{35}, mainly due to the large amount of variations in real-world scene which may significantly increase the difficulty of robustly measuring color information.

 The goal of this paper is to enrich local image representation with color and contrast information by presenting an efficient local descriptor. The key issue lies in how to robustly extract efficient color contrast information which could effectively interact with and strongly compensate for the shape information. It has been observed in visual neuroscience that color perception of a region sometime is more dependent on color contrast at the boundary of the region than  on the spectral reflectance of the region \cite{31}.  Color contrast can have a major effect on color perception, and color and shape interact through the spatial layout of surrounding them relative to a target region \cite{31}.

 Motivated by these biological findings, we develop a novel local descriptor, named as Local Color Contrastive Descriptor (LCCD). Our LCCD descriptor utilizes \emph{f}-divergence to effectively measure the color contrast, which enables it with strong ability to robustly describe local contents of the image. The \emph{f}-divergence yields a class of measures between local distributions, and has been successfully used as features for robust speech recognition \cite{qiao2009affine}. The LCCD differs distinctly from most current local color descriptors, which often extract color features from each region independently, while discarding important correlation information between neighboring regions. The region-based contrastive information enhances spatial locality of the LCCD, and increases robustness against noises which are easily caused by single-pixel operations. Finally, we demonstrate efficiency and effectiveness of the LCCD descriptor for image classification. The main contribution of the paper is summarized as bellow.

 1) We  propose a novel local descriptor, the LCCD, by leveraging the color contractive feature. We develop a novel mechanism to measure color contrast of the image region that detects local contrastive information in both spatial location and multiple color channels. We find  that the proposed color contrast is highly consistent with the structure of low-level filters learned by deep convolutional neural network, as shown in Figure 1.

 2) We introduce \emph{f}-divergence to robustly measure the difference of color distributions between local regions. It has been shown that the \emph{f}-divergence measure is invariant to invertible transformations \cite{qiao2008f}. We leverage this appealing property to enable the LCCD with strong robustness against multiple local image distortions.

3) We propose subspace extension to compute the $f$-divergence measure between different feature histograms. This improvement enables our descriptor with stronger capability for capturing more detailed information from the images, which increases its discriminative power considerably.


4) We show experimentally that the color contrastive information can strongly compensate for gradient-based SIFT by substantially improving its performance through combination. The LCCD descriptor with SIFT achieves remarkable results on three benchmarks: the MIT Indoor-67 database \cite{16}, SUN397 \cite{xiao2010sun} and PASCAL VOC 2007 standards\cite{29}, advancing the state-of-the-art results considerably.

The rest of paper is organized as follows.  Section 2 briefly reviews related studies. Details of the proposed LCCD are presented in Section 3, including descriptions of the $f$-divergence, spatially and channelly contractive features, and the subspace extension. Experimental results are presented in Section 4, followed by the conclusions in Section 5.

\section{Related Works}

    The local image descriptor has been an active research topic in image and multimedia communities in the last years.  Lowe \cite{1} proposed powerful Scale Invariant Feature Transform (SIFT) descriptor by computing a 3D histogram of gradient location and orientation. The spatial location is divided into a $4\times4$ grid and the gradient angle is quantized into eight orientations. The SIFT descriptor is computed based on appearance of the object at particular interest points, and is invariant to transform of scale and rotation. It is  robust against changes in illumination, noise, and viewpoint. Although the SIFT is powerful for image description by computing gradient features, it does not exploit the color information, leading to a less informative representation.
 %
    %
\par
 Local Binary Pattern (LBP) \cite{2} and its extensions have achieved great successes on texture description \cite{Ojala2002} and face recognition \cite{Huang2015,Ahonen2006,3}. It labels image pixels  by thresholding neighborhood of a central pixel to generate a binary string for feature representation. It has been widely applied for face and  texture recognition due to its high performance and computational simplicity. However, the LBP involves pixel-level operation to compute the binary features, which largely limits it robustness against noise and multiple local image distortions.
\par
Recent effort focuses on developing mid/hight-level models by encoding multiple low-level features for image description, such as Bag-of-Features (BoF) \cite{6},
 Object Bank (OB) \cite{7}, and Bag-of-Parts (BoP) models \cite{8}. The OB is a high-level image representation where an image is represented as a scale-invariant response map of pre-trained generic object detectors  \cite{7}. The BoP, building on the HOG feature, automatically detects distinctive parts from scene image for recognition \cite{8}.
%
 %
 Furthermore, a number of recent descriptors achieved impressive results by combining with the SIFT  and SPM model. For example, Orientational Pyramid Matching (OPM) \cite{10} utilizes 3D orientations of objects to form the pyramid and produce the pooling regions. It shows strong complementary abilities to the SIFT and SPM so that the combination of them achieved excellent performance in scene recognition. However, similar to the SIFT, these descriptors do not include either color  or local contrastive information. We will show that these features work as important complementary information for image representation.
\par

There are few works to apply color information for designing local image descriptors. Local Color Statistic (LCS) descriptor \cite{Clinchant2007} was proposed by computing the means and standard deviations of the 3 RGB channels from 16 sub-regions, which results in a 96-dim color feature. It was combined with SIFT descriptor for Fisher Vector encoding, and achieved remarkable performance for scene image classification \cite{28}. In \cite{35}, color descriptors based on color histograms and
moments, and color SIFT were proposed and discussed. It has been demonstrated experimentally that combination of multiple color descriptors and SIFT leads to significant performance improvements on image/video classification \cite{35}. Our work is related to these descriptors by leveraging the color information for image description. By contrast, we compute both color and local contrastive information of the image so that encode richer local features. Therefore, our LCCD descriptor provides a more principled approach for measuring the color information, which sets it apart from all color descriptors of the past.

\section{Local Color Contrastive Descriptor}
This section presents details of the proposed Local Color Contrastive Descriptor (LCCD). We first introduce the \emph{f}-divergence measurement which is applied for robustly computing the contrastive characteristic between two local  regions.  Then the LCCD descriptor is constructed by two types of contrastive features: local spatially-contrastive and channelly-contrastive features. The two features extract contrastive information from spatial locality and multiple color channels, respectively. Finally, a subspace extension is developed to further enhance its discriminative ability.

\subsection{\emph{F}-divergence for Contrastive Measurement}
Computing contractive information between two feature distributions of local regions servers as the basic component of the LCCD descriptor.  Traditionally, classical $L_p$ (e.g. $L_1$ or $L_2$) distance is used to compute the difference (dissimilarity) between two feature vectors. However, the family of \emph{f}-divergence has been shown to be more suitable to measure the contrastive information, due to its robustness to transformations \cite{qiao2008f}. It has been widely applied in statistical learning and information theory, and also has achieved excellent results on  robust speech recognition recently \cite{44}.

 In statistics and information theory, Csisz$\acute{a}$r \emph{f}-divergence \cite{41} (also known as Ali-Silvey distance \cite{42}) measures the difference (dissimilarity) between two distributions. Formally, $f:(0,\infty)\rightarrow R$ is a real convex function and $f(1)=0$, $p_{i}(x)$ and $p_{j}(x)$ are density functions of two distributions defined on measurable $\Re$. Then

\begin{small}
\begin{eqnarray}
D_{f}(p_{i},p_{j})=\int_{\Re}p_{j}(x)f(\frac{p_{i}}{p_{j}})dx
\end{eqnarray}
\end{small}
To insure the \emph{f}-divergence between two identical distribution is zero, $D_{f}(p,p)=0$, we set constraint $f(1)=0$ \cite{41}.  It has been proved that many well-known distances or divergences in statistics and information theory, such as KL divergence, Bhattacharyya distance, Hellinger distance, etc., can be regarded as one of special cases of the \emph{f}-divergence measure\cite{44,werner2013mixed}, depending on the choice of function \emph{f}. The detailed functions of them are presented in Table 1. The choice of the \emph{f} will be discussed as bellow.
\begin{table}[htbp]\scriptsize
 \caption{Definitions of distances or divergences from the f-divergence family}
 \begin{tabular}{l|l|l}
  \toprule
   Distance/divergence & Definition  & $f(t)(t=\frac{p_{i}(x)}{p_{j}(x)})$ \\
  \midrule
  Bhattacharyya distance & $-\ln\int\sqrt{p_{i}p_{j}}dx$ & $\sqrt{t}$ \\
  KL-divergence & $\int{p_{i}}\ln\frac{p_{i}}{p_{j}}dx$ & $t\log(t)$ \\
  Symm. KL-divergence & $\int(p_{i}\ln\frac{p_{i}}{p_{j}}+p_{j}\ln\frac{p_{j}}{p_{i}})dx$ & $t\log(t)-\log(t)$ \\
  Hellinger distance & $\frac{1}{2}\int(\sqrt{p_{i}}-\sqrt{p_{j}})^{2}dx$ & $\frac{1}{2}(\sqrt{t}-1)^{2}$\\
  Total variation & $\int|p_{i}-p_{j}|dx$ & $|t-1|$\\
  Pearson divergence & $\int\frac{1}{p_{j}}(p_{i}-p_{j})^{2}dx$  & $(t-1)^{2}$\\
 Alpha divergence & $\frac{1}{\alpha(1-\alpha)}\int(1-\int{p_{i}^{\alpha}p_{j}^{1-\alpha}})dx $ & $ \frac{t}{1-\alpha}+\frac{1}{\alpha}-\frac{t^{\alpha}}{\alpha(1-\alpha)}$\\
  \bottomrule
 \end{tabular}
\end{table}

It has been shown that the \emph{f}-divergence has a number of remarkable properties \cite{41,43}. One of its advantages is that the \emph{f}-divergence between two distributions is invariant to transformations \cite{44}. Consider a feature space $\Re$ and two distributions $p_{i}(x)$ and $p_{j}(x)$ in $\Re$ ($x\in\Re$). Let $g:X \rightarrow Y$ (linear or nonlinear) denotes an invertible transformation function, which maps $x$ into a new feature $y$. By this way, the distributions $p_{i}(x)$ and $p_{j}(x)$ are transformed to $q_{i}(y)$ and $q_{j}(y)$. We aim to seek an invariant measurement $D$ that $D(p_{i},p_{j})=D(q_{i},q_{j})$, based on the following theorem \cite{qiao2008f}.
\newtheorem*{theorem}{Theorem}

\begin{theorem}
 The f-divergence between two distributions is invariant under an invertible transformation $g$ on the space $\Re$.
\end{theorem}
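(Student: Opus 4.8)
The plan is to reduce the claim to the change-of-variables formula for probability densities and then exhibit an exact cancellation of Jacobian factors. First I would record how the densities transform under $y = g(x)$. Since $g$ is invertible (and, I assume, differentiable with differentiable inverse), the standard change-of-variables rule gives $q_{i}(y)\,dy = p_{i}(x)\,dx$ and $q_{j}(y)\,dy = p_{j}(x)\,dx$, so that
\begin{small}
\begin{equation}
q_{i}(y) = p_{i}(x)\left|\det J_{g^{-1}}(y)\right|, \qquad q_{j}(y) = p_{j}(x)\left|\det J_{g^{-1}}(y)\right|,
\end{equation}
\end{small}
where $J_{g^{-1}}$ is the Jacobian of $g^{-1}$ and $x = g^{-1}(y)$.

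The key observation --- and really the whole content of the theorem --- is that this common Jacobian factor cancels in the density ratio:
\begin{small}
\begin{equation}
\frac{q_{i}(y)}{q_{j}(y)} = \frac{p_{i}(x)}{p_{j}(x)}.
\end{equation}
\end{small}
Thus the argument fed to the convex function $f$ is left unchanged by the transformation, which is the precise reason one should expect invariance.

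It then remains to push this through the defining integral. Writing the $f$-divergence of the transformed pair and changing variables back from $y$ to $x$ via $dy = |\det J_{g}(x)|\,dx$, I would compute
\begin{small}
\begin{eqnarray}
D_{f}(q_{i},q_{j}) &=& \int_{Y} q_{j}(y)\,f\!\left(\frac{q_{i}(y)}{q_{j}(y)}\right)dy \nonumber\\
&=& \int_{\Re} \frac{p_{j}(x)}{|\det J_{g}(x)|}\,f\!\left(\frac{p_{i}(x)}{p_{j}(x)}\right)|\det J_{g}(x)|\,dx \nonumber\\
&=& \int_{\Re} p_{j}(x)\,f\!\left(\frac{p_{i}(x)}{p_{j}(x)}\right)dx = D_{f}(p_{i},p_{j}),
\end{eqnarray}
\end{small}
where the two Jacobian factors --- one from the density transformation of $q_{j}$, the other from the measure change $dy \to dx$ --- cancel exactly. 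This establishes the asserted equality $D_{f}(p_{i},p_{j}) = D_{f}(q_{i},q_{j})$.

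I do not expect the main calculation to be the hard part; the cancellation is forced by the fact that \emph{both} densities transform with the same Jacobian, so it is essentially automatic. The genuine point requiring care is regularity: the argument as written presumes $g$ is a diffeomorphism so that $J_{g}$ exists and is nonvanishing almost everywhere. For a merely invertible bi-measurable $g$ the cleaner route is the abstract change-of-measure (Radon--Nikodym) formulation, observing that the likelihood ratio $dP_{i}/dP_{j}$ is preserved under the induced pushforward measures; this is the measure-theoretic statement that underlies the informal Jacobian cancellation above, and it is where I would place the emphasis if full rigor is demanded.
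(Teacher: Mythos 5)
Your proof is correct and follows essentially the same route as the paper's: the change-of-variables formula $q_{i}(y)=p_{i}(g^{-1}(y))\,|\det J_{g^{-1}}(y)|$, cancellation of the common Jacobian factor inside the argument of $f$, and cancellation of the density Jacobian against the measure Jacobian in the integral (you merely run the computation from $D_{f}(q_{i},q_{j})$ back to $D_{f}(p_{i},p_{j})$, whereas the paper goes in the forward direction). Your closing remark on regularity --- that the Jacobian argument presumes $g$ is a diffeomorphism, with the Radon--Nikodym formulation covering general invertible bi-measurable maps --- is a sound observation that the paper glosses over, but it does not alter the substance of the argument.
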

\begin{proof}
 With the invertible transformation $g$, we have $y=g(x)$, so that the distribution $q_{i}(y)$ can be calculated by,
 \begin{small}
\begin{eqnarray}
q_{i}(y)=p_{i}(g^{-1}(y))G(y),
\end{eqnarray}
\end{small}
where $g^{-1}$ denotes the inverse function of $g$, and $G(y)$ is the absolute value of the determinant of Jacobian matrix of the $g^{-1}(y)$. With $dx=G(y)dy$, we have,
 \begin{small}
\begin{eqnarray}
D_{f}(p_{i},p_{j})\nonumber
&=&\int p_{j}(x)f(\frac{p_{i}(x)}{p_{j}(x)})dx \nonumber \\
&=&\int p_{j}(g^{-1}(y))f(\frac{p_{i}(g^{-1}(y))G(y)}{p_{j}(g^{-1}(y))G(y)})G(y)dy \nonumber\\
&=&\int q_{j}(y)f(\frac{q_{i}(y)}{q_{j}(y)})dy \nonumber\\
&=&D_{f}(q_{i},q_{j})
\end{eqnarray}
\end{small}
\end{proof}

\begin{figure*}
\centering
\includegraphics[height=3in,width=5.5in,angle=0]{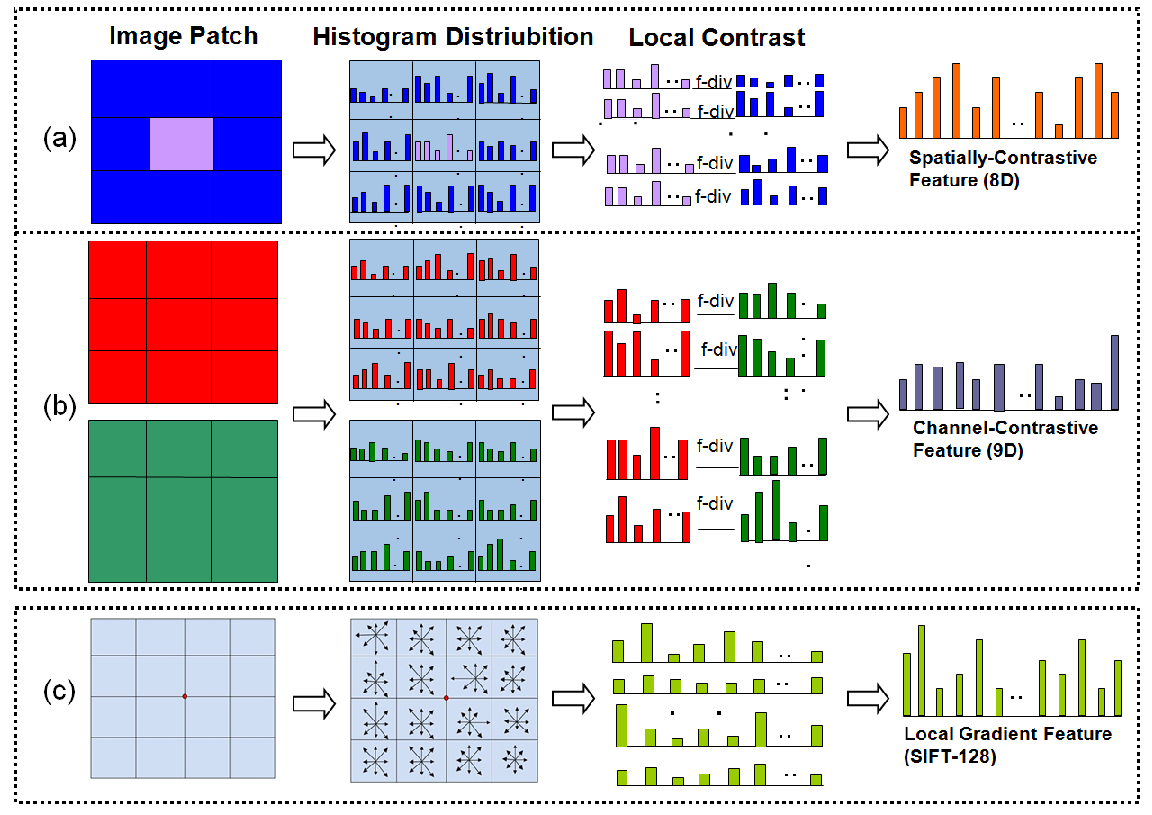}
\caption{Pipeline of the proposed Local Color Contrastive Descriptor (LCCD), including (a) the Spatially-Contrastive Feature (LCCD$_S$), and (b) the Channel-Contrastive Feature (LCCD$_C$). (c) The computation of the SIFT descriptor in a local image path. The main differences between the LCCD and SIFT are demonstrated clearly. The SIFT computes the histograms of gradient information from each divided region independently, while our LCCD captures the spatial correlation feature between neighboring regions (by the spatially-contrastive feature), and encodes color contractive information between different image channels (by the channel-contrastive feature). Thus the LCCD
descriptor provides strong complementary information for the SIFT by leveraging both color and contrastive information of the image.}
\label{fig:12}
\end{figure*}

This is an appealing property for image description, whose goal is to capture meaningful underlying local structure of the image, while being robust against multiple local image distortions. Therefore, we exploit Hellinger distance of the \emph{f}-divergence as the basic function to compute the color contrast between local image regions.

\subsection{Local Color Contrastive Descriptor}
Motivated from the observation in visual neuroscience that contrastive information plays a crucial role on color perception, we aim to enrich the image representation with the contrastive aspect of color and shape information.  To this end, we explore this neural mechanisms of color contrast to design a new and powerful local image descriptor. The proposed Local Color Contrastive Descriptor (LCCD) is computed from an image patch by dividing it into a number of (sub) regions (cells), as shown in the left column of Figure 2. The region-based property of the LCCD increases its robustness against image noise which often affects the performance of the descriptors building on isolated pixel operation, such as the LBP based methods \cite{2,Huang2015,Ojala2002,Ahonen2006}. Besides, it takes into account the spatial layout of the image as well as the statistical properties computed from each region. In order to extract the contrastive features from both spatially neighboring regions and different image channels, the LCCD computes both \emph{spatially-contrastive} and \emph{channelly-contrastive features} based on the \emph{f}-divergence measure.

\subsubsection{Spatially-Contrastive Feature}
  The spatially-contrastive feature computes the relative difference of statistical color features between neighboring regions. The image patch is transformed from the RGB space to the opponent color space  as \cite{35},

   \begin{equation}
\left(
\begin{array}{c}
O_{1}\\
O_{2}\\
O_{3}\\
\end{array}
\right)
=\left(
 \begin{array}{c}
   \frac{R-G}{\sqrt{2}}\\
   \frac{R+G-2B}{\sqrt{6}}\\
   \frac{R+G+B}{\sqrt{3}}\\
  \end{array}
\right)               
\end{equation}
where channel $O_{1}$ and $O_{2}$ include the color information and $O_{3}$ has the intensity feature. We compute a spatially-contrastive feature from each channel. Specifically, as shown in Figure 2(a), an image patch (in one channel) is divided into $3 \times 3=9$ regions. We compute a $d$-bin histogram feature from each region, which can be considered as a discrete probability distribution of the feature from this region.  The contrastive feature is computed by measuring the \emph{f}-divergence between the feature distributions of the central region ($P$) and its 8 neighboring regions ($Q={Q^1,Q^2,...,Q^8}$) (shown in Figure 2(a)),

\begin{small}
\begin{eqnarray}
LCCD_S=[h(P,Q^{1}), h(P,Q^{2}), \ldots, h(P,Q^{8})],
\end{eqnarray}
\end{small}
where $h(P,Q^{i})$ is the Hellinger distance to measure the contrast between two feature histograms from $P$ and $Q^{i}$. It can be computed as,
\begin{small}
\begin{eqnarray}
h(P,Q^{i})=\frac{1}{2}\sum_{k=1}^{d}(\sqrt{p_k}-\sqrt{q^i_k})^{2}, i=1,2,...8,
\end{eqnarray}
\end{small}
where $k$ is the bin index of the $d$-bin histogram, $p_{k}$ and $q_{k}^{i}$ are the values of the $k$-th bins of the feature histograms of the central ($P$) and the $i$-th neighboring regions ($Q^{i}$), respectively.  Therefore,  we compute an 8-dimensional feature vector from an image patch (in a single channel). Each dimension of the vector corresponds to a contrastive value from a neighboring region. Finally, we calculate three such 8-dimensional features from $O_{1}$, $O_{2}$ and $O_{3}$  channels respectively, and concatenate them to construct the final spatially-contrastive feature, with dimensions of 24D. The number of dimensions is significantly lower than 128D used by the SIFT. One may suggest to use a more complex distance function of the \emph{f}-divergence, such as the Alpha divergence. But we found experimentally that other complex distances do not yield a significant better result. The Hellinger distance is simple, but it is effective enough to capture the meaningful local contrastive structure of the images.

\subsubsection{Channel-Contrastive Feature}
 The channel-contrastive feature computes the feature contrast between different channels of a same region. Similarly, the Hellinger distance is used to compute the \emph{f}-divergence between the R, G, and B channels. As shown in Figure 2(b), we first extract a histogram feature from each region in both referred channels. Then we compute the \emph{f}-divergence between two channels as,

\begin{small}
\begin{eqnarray}
h(Q_x^i, Q_y^i)=\frac{1}{2}\sum_{k=1}^{d}(\sqrt{q_{x,k}^i}-\sqrt{q_{y,k}^i})^{2}
\end{eqnarray}
\end{small}
where $Q_x^i$ and $Q_y^i$ are the $i$-th regions of the $x$ and $y$ channels. $k$ is the bin index of the $d$-bin histogram. For  an image patch with $3\times3$ regions, we can get a 9-dimensional channelly-contractive vector as,

\begin{small}
\begin{eqnarray}
LCCD_{C_{xy}}=[h(Q_x^1, Q_y^1),h(Q_x^2, Q_y^2), \ldots, h(Q_x^9, Q_y^9)],
\end{eqnarray}
\end{small}
The final channel-contrastive feature (LCCD$_C$) is constructed by concatenating three channel-contrastive vectors computed between R and G, R and B, G and B channels, respectively. More discussions on the channel-constrictive feature are presented in Section $3.D$. The proposed LCCD
 descriptor is constructed by using both spatially-contrastive and channel-contrastive features.

\subsection{Subspace Extension}

To enhance the discriminative capability of the LCCD, we introduce a subspace based method to compute a more meaningful feature from each image region. The subspace extension allows the LCCD to capture more detailed features from the image, and hence naturally yields extra important information for computing the color contrast, which is the key to discriminativeness.

The subspace feature is computed based on the original histogram vector extracted from each region. Specially, assume that we have a $d$-bin histogram, the subspace vector is generated by moving a (1D) sub-window of size (length) $d_1$ densely through the original $d$-bin histogram. By this way, the original histogram is now decomposed into multiple subspaces or sub histograms, each of which includes $d_1$ bins. The number of the newly generated subspaces is $d-d_1+1$.

We compute each subspace contrast between two generated sub histograms independently. The Eq. (4) is extended as,
\begin{small}
\begin{eqnarray}
h_{sub}^j(P,Q^{i})=\frac{1}{2}\sum_{k=j}^{j+d_1-1}(\sqrt{p_k}-\sqrt{q^i_k})^{2},
\end{eqnarray}
\end{small}
where $ j=1,2,...,(d-d_1+1).$
Then the final contrastive feature ($h_{sub}(P,Q^{i})$)  is constructed by concatenating all the subspace contrasts ($h_{sub}^j$) computed between two considered regions (e.g. region $P$ and $Q^i$),
\begin{footnotesize}
\begin{eqnarray}
h_{sub}(P,Q^{i})=[h_{sub}^1(P,Q^{i}),h_{sub}^2(P,Q^{i}),\ldots,h_{sub}^{d-d_1+1}(P,Q^{i})],
\end{eqnarray}
\end{footnotesize}
Therefore, the subspace-extended contrastive feature between two regions is a $(d-d_1+1)$-dimensional vector, while the original non-subspace one is a single value computed by Eq. (4). The subspace extension is involved in region level, hence it can be directly adopted to compute both spatially- and channel- contrastive features.  In all our experiments, we empirically set the size of subspace window to $d_1=3$.

\subsection{Analysis and Discussions}
\begin{figure}
\centering
\includegraphics[height=1.2in,width=3.2in,angle=0]{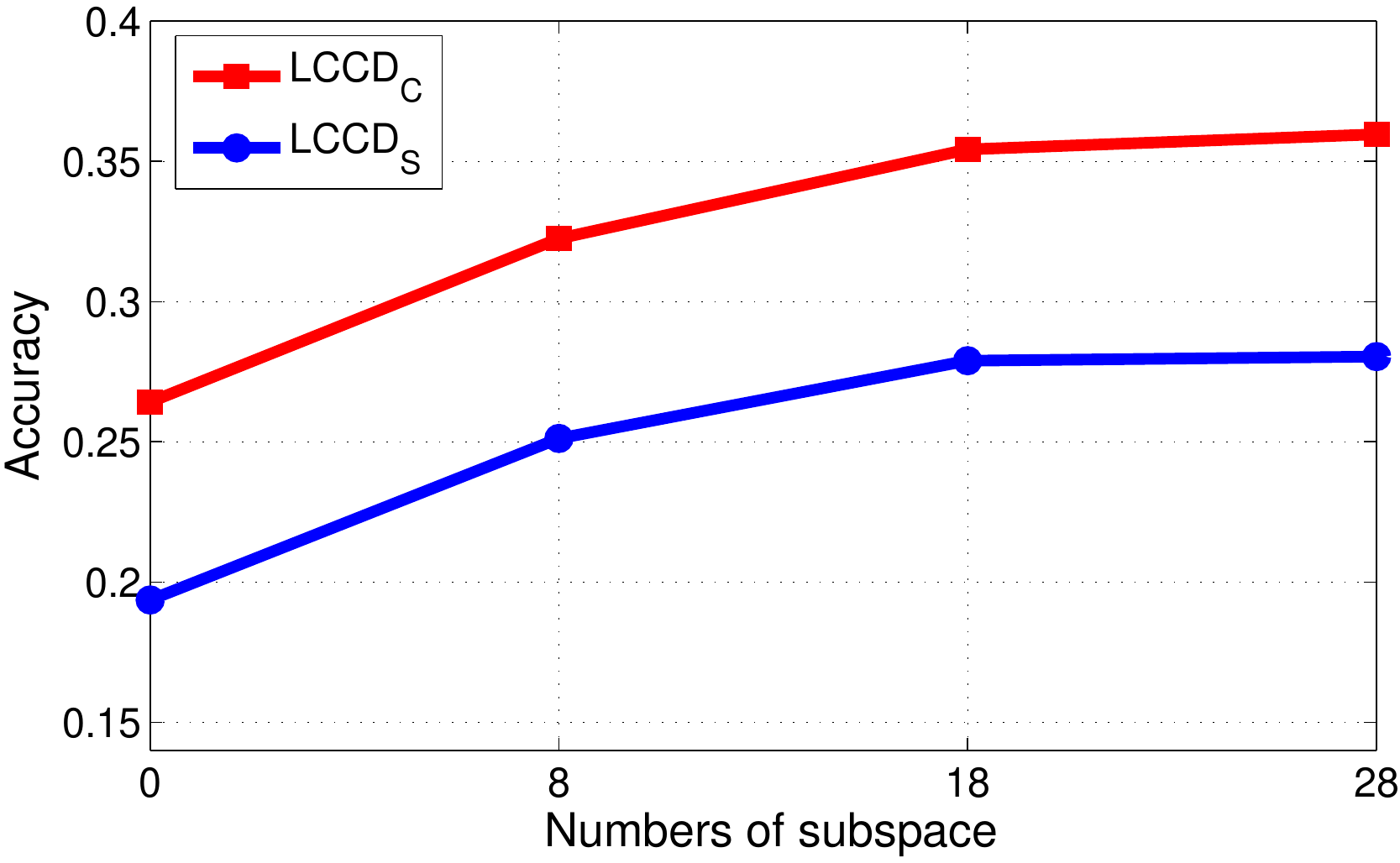}
\caption{Performance of the spatially-contrastive feature (LCCD$_S$) and channel-contrastive feature (LCCD$_C$) with non-subspace and various dimensions of the subspace extension (on the MIT Indoor-67 database).}
\label{fig:12}
\end{figure}

To verify efficiency of both spatially-contrastive and channel-contrastive features, and the subspace extension, we utilize the MIT Indoor-67 database \cite{16} (the details of the database are described in Section 4) to compare the performance of the LCCD$_S$ and LCCD$_C$ with non-subspace and subspace extensions under various dimensions.  Notice that the dimension of the LCCD  with the subspace extension (for an image patch) is determined by the number of the original histogram bins ($d$), and the size (length) of its subspace window ($d_1$): $d_{sub}=d-d_1+1$. In our experiments, we set $d=10, 20, 30$ with a fixed size of the subspace window, $d_1=3$. Then we get $d_{sub}=8, 18, 28$.  The results of both LCCD$_S$ and LCCD$_C$ are presented in Figure 3. It can be found that, the LCCD with subspace extensions consistently outperforms that of the non-subspace ones considerably in both spatially- and channel-constrictive features. The LCCD$_C$ performs slightly better than the LCCD$_S$.  The accuracies increase with increasing numbers of the subspace dimensions, and reach their stabilities at 18-dimensional subspaces in both cases. By trading off the performance and computational cost, we use the 18-dimensional subspace in all our following experiments.

We further investigate the performance of the LCCD$_S$ and  LCCD$_C$ separately in Figure 4(a), and their combinations with SIFT in Figure 4(b). In the Figure 4(a), the LCCD$_C$ computed from the RG and RB channels receive slightly higher performance than that of the GB channels, and the combination of three channel contrasts receives a significant improvement. The LCCD$_S$ gets a slightly better accuracy than the LCCD$_C$ on each single contrast, but its performance is lower than that of the LCCD$_C$ with the combination of three contrasts. As expected, the combination of both the LCCD$_S$ and LCCD$_C$ achieves a further improvement over each single performance.

In Figure 4(b), we can find that either single LCCD$_S$ or LCCD$_C$ can improve the performance of the original SIFT largely by combination, and the highest accuracy is obtained  by combining the SIFT with both of them. As can be found, for the combination with SIFT, the LCCD$_C$ by using only RG and RB channels achieves a slightly higher accuracy than the LCCD$_C$ using three channel contrasts. We obtain similar results when we conducted more experiments on other databases in the Section 4. We guess that the contrastive feature included between the GB channels may be relatively weak or highly redundant. Therefore, our final LCCD descriptor applied in all our following experiments only contains the LCCD$_S$, and the LCCD$_C$ computed from the RG and RB channels.


\begin{figure}
\centering
\includegraphics [width = 4.3 cm,height=3.5cm]{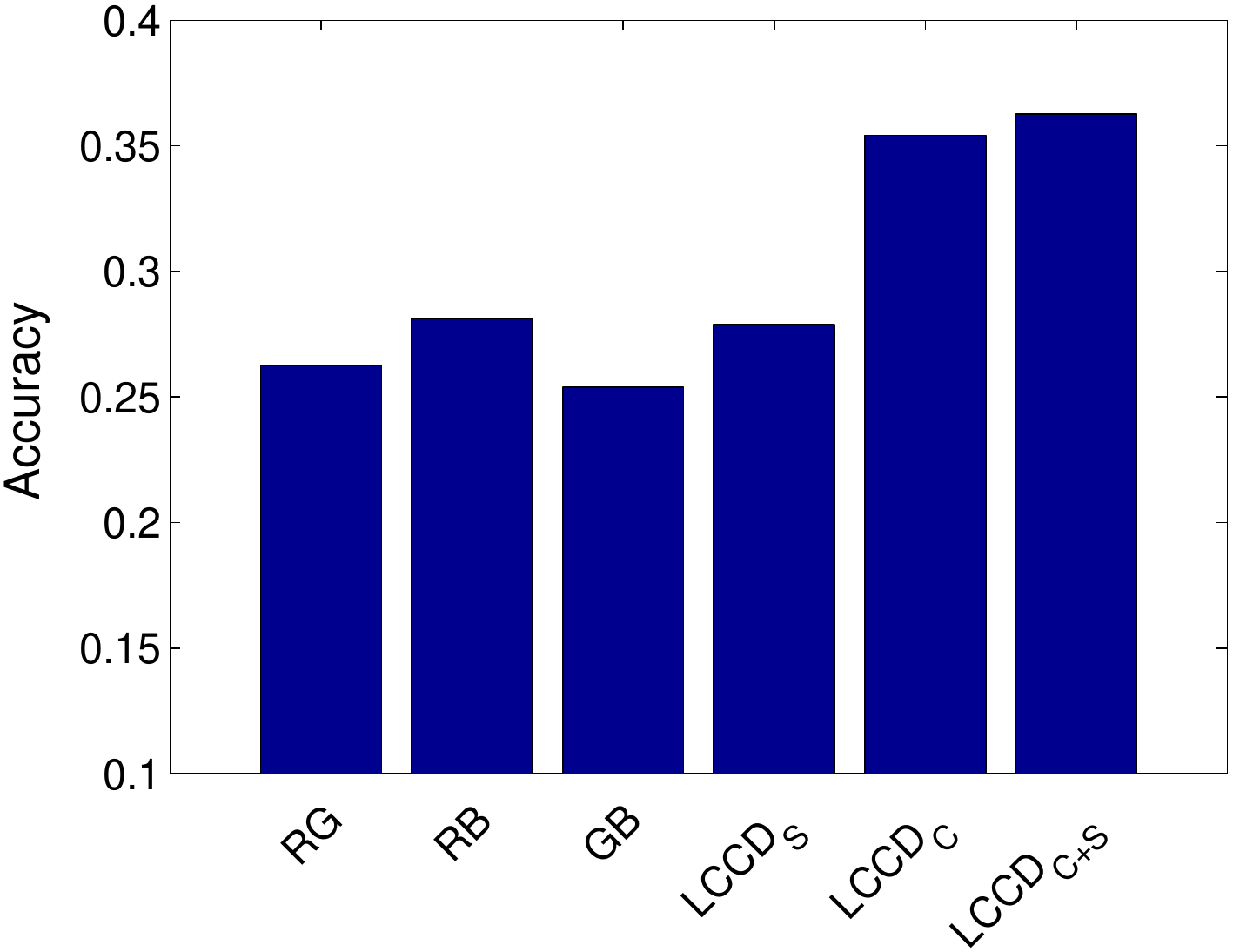}
\includegraphics [width = 4.3 cm,height=3.5cm]{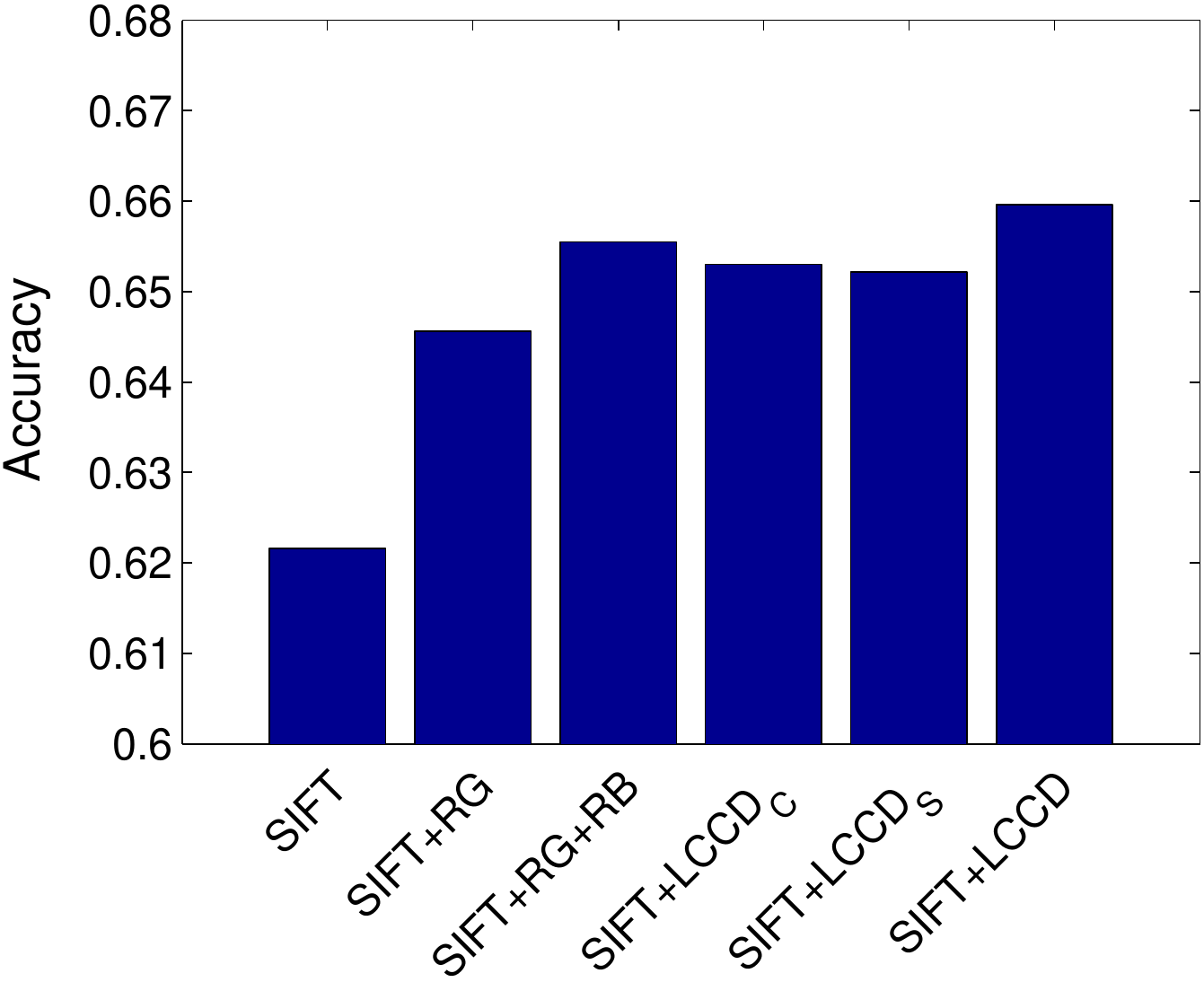}
\caption{(a) Performance of the LCCD$_S$ and LCCD$_C$ in single RG, RB and GB channels, and their combinations, all with subspace extension. (b) Combinations of the LCCD$_S$ and/or LCCD$_C$ with the SIFT.}
\label{fig:12}
\end{figure}

We discuss the fundamental difference between the proposed LCCD and SIFT \cite{1}, which has received great success for image description in last decade. The SIFT is extremely powerful for detecting robust shape information of the image by computing local gradient orientations. For comparing the LCCD with SIFT, we present the basic pipeline of the SIFT in Figure 2(c). The SIFT descriptor (for an image patch) is generated by concatenating 16 histograms, each of which is computed on gradient orientations from a divided region independently. It can be found clearly that the LCCD is different from the SIFT at two main aspects. Firstly, the SIFT is computed in the gradient space by applying the histogram of gradient orientation for feature representation which only includes main shape information of the image. The LCCD is able to explore meaningful color information as an important complementary feature that enriches the representation. Secondly, the LCCD computes multiple contrastive features both spatially and between multiple color channels, making it capable of encoding more meaningful local contextual and underlying structural information than the SIFT, which does not consider local spatial relationships (e.g. local contrast) between neighboring regions at all. This may lead to a significant information loss of the SIFT. We will show experimentally that both advanced properties of the LCCD provide strong  complementary to SIFT descriptor for image classification.

We further show the insights of the proposed contrastive mechanism by connecting it with recent Deep Convolutional Neural Networks (DCNN) \cite{decaf2013}. Our observation can be verified by recent success of the DCNN, which shows that both color contrast and edge information are the main low-level features for image description, as indicated in Figure 1. Intuitively, the design of our color contrastive mechanism is closely related to the structures of  low-level filters (from the first convolutional layer) learned by the DCNN. As shown in Figure 1, there are mainly two types of the low-level filters, and one of them intuitively corresponds to our color contrast mechanism (mainly on the bottom part) as follow. First, some filters are mainly displayed in a single color. It means that their weights are varied largely between image channels, but are changed slightly within each independent channel. Hence, they are able to capture the contrastive characteristics between different color channels. This mechanism is relatively closed to the pipeline of our channel-contractive feature. Second, some other filters are displayed in two or multiple colors, indicating that their weights are changed significantly both spatially and between channels. Therefore, they are able to detect the  contrastive information from both aspects, which are similar to both of our contrastive descriptors. These intuitive low-level connections between the LCCD and DCNN provides a strong theoretical support to the proposed color constrictive mechanism.

\section{Experimental Results and Discussions}
The performance of the LCCD was evaluated on three challenging benchmark databases for image classification and scene categorization: the MIT Indoor-67 database \cite{16}, SUN397 \cite{xiao2010sun} and PASCAL VOC 2007 standards\cite{29}. We compare the performance of the LCCD and its combination with SFIT against recent results on three databases.

In all our experiments, we resize the input image into $470 \times 380$. Each image is divided into $50 \times 50$ regions (cells). A LCCD feature vector is extracted from an image patch with the size of $3 \times 3$ regions.  The LCCD features are computed densely by moving a patch window with the size of $3 \times 3$ regions through all the $50 \times 50$ divided regions. Finally, we get $48 \times 48$ LCCD feature vectors from an image. Each LCCD feature vector is computed as follow. First, a 20-bin histogram feature is extracted from each region for computing the color contrast.  Second, the subspace scheme is adopted, and the 20-bin histogram is decomposed into 18 subspaces or sub histograms by using the subspace window with size (length) of $3$. Third, we compute a contrastive value from each pair of subspaces by using Eq. (6) or Eq. (7), and then get an 18D  contrastive vector from each pair of considered regions. Fourth, for a defined image patch, we compute the contrastive vectors from all possible region pairs, and generate the final spatially- and channel-contrastive features with dimensions of $18 \times 8=144$, and $18 \times 9=162$, respectively. Then we further reduce the dimensions of both contrastive features to 80 by using PCA \cite{14}, and finally generate  LCCD$_S$, LCCD$_C\in\mathbb{R}^{80\times 48\times 48}$ for an image.

We apply Fisher Vector (FV) encoding for both LCCD$_S$ and LCCD$_C$ separately. We train a codebook with 256 centers using the Gaussian Mixture Model (GMM), and encode the generated LCCD$_S$ or LCCD$_C$ vectors with the BoW model \cite{wang2013comparative,yang2010bag,28}. The final LCCD descriptor combines both LCCD$_S$ and LCCD$_C$. For SIFT, we used the VLFeat \cite{12} library to extract SIFT descriptor \cite{1} with 128 dimensions for each patch. Similarity, they are reduced into 80D by using PCA \cite{14}, and then are also encoded with the BoW model with 256 centers.

\subsection{On the MIT Indoor-67 Database}

We evaluate the performance of the LCCD on the task of indoor scene recognition. The experiments were conducted on the large-scale MIT Indoor-67 database \cite{16}, which contains 67 classes and total 15,620 images. The number of images varies across categories, but at least 100 images are included in each category. The numbers of training and  testing images are 80 and 20 per category, respectively.

\begin{table}[!h]
\caption{Comparisons of the LCCD with the-state-of-the-art descriptors WITHOUT Fisher Vector Encoding on the MIT Indoor-67.}
\centering  
\renewcommand{\arraystretch}{1.2}
\begin{tabular}{p{3.5cm}||p{2cm}<{\centering}||p{2cm}<{\centering}}
\hline
\textbf{Method} & \textbf{Publication} &\textbf{Accuracy}($\%$)\\
\hline
Quattoni \textit{et.al.}\cite{16} &CVPR2009 & 26.00\\
\hline
Li \textit{et.al.} \cite{7}& NIPS2010 &37.60\\
\hline
Wang \textit{et.al.} \cite{17}& CVPR2010  & 54.62\\\hline

\hline
SIFT \cite{1}& IJCV2004 &51.85\\
Color SIFT \cite{35}& TPAMI2010 &56.10\\
BoP+SIFT(Juneja \textit{et.al.}\cite{8})& CVPR2013 & 56.66\\
\hline
\hline
LCCD & -- & 20.36\\
LCCD+SIFT & -- & \textbf{57.42}\\
\hline
\end{tabular}
\end{table}
\begin{table}[!h]
\caption{Comparisons of the LCCD with the-state-of-the-art descriptors WITH Fisher Vector Encoding on the MIT Indoor-67.}
\centering  
\renewcommand{\arraystretch}{1.2}
\begin{tabular}{p{3.5cm}||p{2cm}<{\centering}||p{2cm}<{\centering}}
\hline
\textbf{Method} & \textbf{Publication} & \textbf{Accuracy}($\%$)\\
\hline

Kobayashi \emph{et.al.} \cite{19} & CVPR2013&58.91\\
\hline
Doersch \textit{et.al.} \cite{20} &NIPS2013 &64.03\\

\hline
\hline
SIFT (Jorge \textit{et.al.}) \cite{28} &  ECCV2010 &62.16\\
Color SIFT \cite{35}& TPAMI2010 &64.22\\
BoP+SIFT(Juneja \textit{et.al.} \cite{8}) &CVPR2013  & 63.10\\

OPM+SIFT(Xie \textit{et.al.} \cite{10}) &CVPR2014 & 63.48\\
\hline
\hline
LCCD & -- & 36.43\\
LCCD+SIFT & -- & \textbf{65.96}\\
\hline
\end{tabular}
\end{table}
It has been verified that the FV encoding can improve the performance considerably on this database. For a fair comparison, we conducted two groups of experiments separately by evaluating the methods with and without the FV encoding. The results of them are presented in Table 2 and 3, respectively.

\begin{figure}
\centering
\includegraphics[scale=0.27]{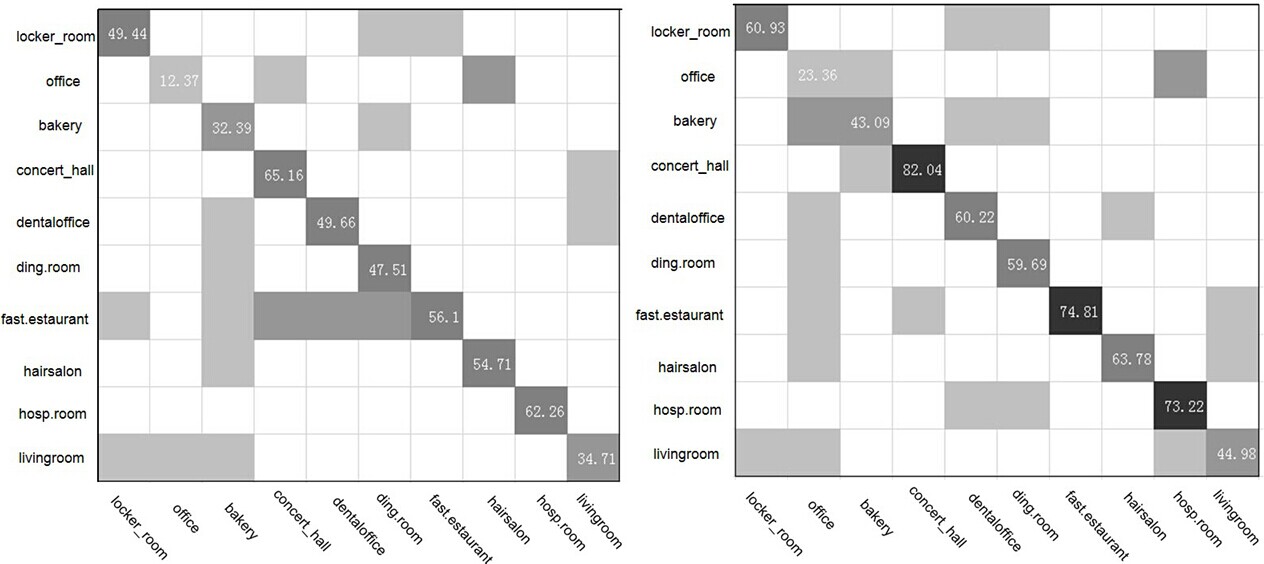}
\caption{Confusion matrices for the SIFT (left), and LCCD+SIFT (right) on ten categories.}
\label{fig:12}
\end{figure}

As can be found, our LCCD descriptor combined with SIFT achieves the best performance in both cases. In the case of with FV encoding, the LCCD+SIFT achieves classification accuracy at $65.96\%$, which surpasses the closest result achieved by the OPM+SIFT \cite{10}) by a large margin of about $2.5\%$. Obviously, it improves the performance of only SIFT substantially in both cases, with improvements at about $6\%$ and $4\%$ for without and  with VF. These improvements are considerably larger than those done by recent proposed  Bag-of-Parts (BoP) \cite{8} and  Orientational Pyramid Matching (OPM) \cite{10} descriptors, which improve SIFT with $0.94\%$ and $1.32\%$ respectively in with FV case. It clearly indicates that our color contrastive feature provides stronger complementary information for SIFT than the BoP and OPM methods.
Beside, we also compared our descriptor against the color SIFT and obtained about $1.5\%$ improvements in both cases, demonstrating that our measure of color in contrast mechanism is more efficient than the color feature applied in the color SIFT descriptor.

To find more detailed linkage between the LCCD and SIFT, we construct two confusion matrices for SIFT and SIFT+LCCD by using ten categories: $bakery$, $concert hall$, $dental office$, $dinging room$, $hairsalon$, $hospital room$, $fast$ $food restaurant$, $office$, $livingroom$ and $locker room$. The two matrices are shown in Figure 5.  Obviously, the values of  diagonal elements in the SIFT+LCCD  matrix are significantly larger than those in the single SIFT matrix. For example, the accuracies of the $concert hall$ and $fastfood restaurant$ increase substantially:  $65.16\%\rightarrow82.04\%$ and  $56.10\%\rightarrow74.81\%$ respectively, indicating that the LCCD descriptor is greatly complementary to  SIFT for image description.

\begin{table}[!h]
\caption{Classification errors within paired categories by single LCCD or SIFT or combination of them.}
\centering  
\renewcommand{\arraystretch}{1}
\begin{tabular}{p{1.8cm}|p{1.8cm}||p{1.5cm}<{\centering}|p{1.5cm}<{\centering}}
\hline
category A & category B & SIFT & LCCD+SIFT \\\hline
$studionmusic$ & $tvstudio$ & 10.53\% & 5.27\% \\
$restaurant$ & $bar$ & 5\% & 0 \\
$poolinside$ & $airportinside$ & $10\%$ & $5\%$ \\
$clothingstore$ & $bedroom$ & $5.56\%$ & 0 \\
$corridor$ & $stairscase$ & $9.52\%$ & $4.76\%$\\
\hline
\hline
category A & category B & LCCD & LCCD+SIFT \\\hline
$gym$ & $closet$ & 5.56\% & 0\\
$hairsalon$ & $greenhouse$ & 4.76\% & 0\\
$jewellery$ & $mail$ & 9.09\% & 4.55\% \\
$meetingroom$ & $classroom$ & 13.64\% & 4.55\% \\
$restaurant$ & $buffet$ & 5\% & 0 \\ \hline
\end{tabular}
\end{table}

\begin{figure*}
\centering
\includegraphics[height=4in,width=6in,angle=0]{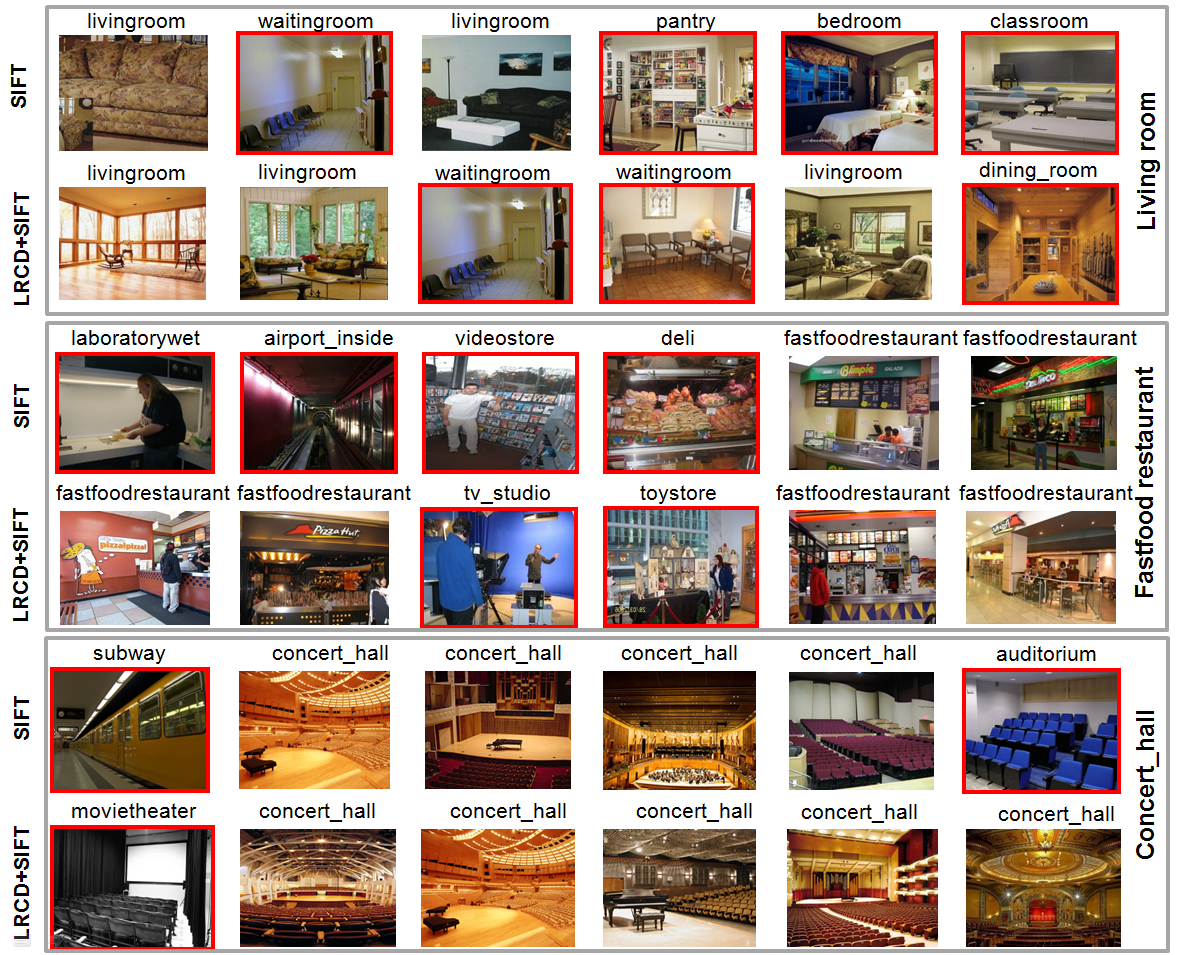}
\caption{ \footnotesize Image samples from categorization results by the SIFT and LCCD+SIFT. The name on top of each image denotes the ground truth category. Images from left to right are sorted by their precision scores in decreasing order from 10th to 15th. Images with top 9 precision scores are almost correctly classified.  Images with incorrect classification are labeled by red boundary boxes.}
\label{fig:12}
\end{figure*}

To further evaluate efficiency of the LCCD, we select several pairs of categories which are difficult to be classified correctly by either the single SIFT or LCCD. The error rates by each of them, and their combination are listed in Table 4. It can be found clearly that the error rates are reduced largely (about $5\%$)  by the combination of them, some of which achieve perfect performance with zero errors, further indicating that the LCCD and SIFT compensate well for each other. As a better demonstration, we also present  a number of example images categorized by SIFT and LCCD+SIFT in Figure 6. The improvements by our descriptor are obvious again. Most incorrect categorizations by our descriptor are acceptable, since most of these cases are even hard to be separated correctly by our human, such as $living room$ and $waiting room$,  $movie theater$ and $concert hall$.


\subsection{On the SUN397}
The performance of the proposed LCCD descriptor was evaluated on the SUN397 \cite{xiao2010sun}. The database has 397 different scene classes, which is probably the largest database for scene classification until now. It includes 108,754 images in total. The number of images varies across classes, and at least 100 images are included in each class. Our experiments follow previous work \cite{28,10} by using a subset of the dataset, which has 50 training and 50 testing images per class, averaging over 10 partitions.

\begin{table}[!h]
\caption{Comparisons of the LCCD with the-state-of-the-art descriptors WITH Fisher Vector Encoding on the SUN397.}
\centering  
\renewcommand{\arraystretch}{1.3}
\begin{tabular}{p{3.8cm}||p{1.7cm}<{\centering}||p{2cm}<{\centering}}
\hline
\textbf{Method} & \textbf{Publication} & \textbf{Accuracy}($\%$)\\
\hline

Xiao \emph{et.al.}\cite{xiao2010sun} & CVPR2010&38.00\\
DeCAF \cite{decaf2013} & ICML2014 & 40.94\\
\hline
SIFT(Jorge \textit{et.al.}) \cite{28} &  IJCV2013 &43.02\\
LCS+SIFT(Jorge \textit{et.al.}) \cite{28}& IJCV2013 &47.20\\
OPM+SIFT (Xie \textit{et.al.} \cite{10}) &CVPR2014 & 45.91\\
\hline
\hline
LCCD & -- & 20.29\\
LCCD+SIFT & -- & \textbf{49.68}\\
\hline

%
\end{tabular}
\end{table}

The performance of the LCCD descriptor was  evaluated with the FV encoding by comparing it with recent results in the SUN397 database. As shown in Table 5, LCCD+SIFT descriptor achieves the highest mean accuracy at $49.68\%$, which largely improves the performance of single SIFT with more than $6.5\%$. The improvement is more significant than the most recent combination methods by the OPM+SIFT (at $45.91\%$) \cite{10} and LCS+SIFT (at $47.20\%$) \cite{28}. Several image categories with top improvements by our combined descriptor, compared to the single SIFT, are presented in Figure 7. It can be found that our descriptor boosts the performance of the SIFT substantially, with improvements of $30\%$ in $swimming$ $pool$  $outdoor$ and $20\%$ in $thrifshop$ categories.  In the right of the Figure 7, we list a number of categories which have very similar global structures to the left ones, making them difficult to be discriminated correctly. Such confused categories commonly exist in scene recognition, some of them in the MIT Indoor-67 are also shown in Figure 6. Our improvements on these categories demonstrate that our color descriptor is able to capture more local detailed features which are crucial to identify these ambiguous categories.

Furthermore, we notice that the proposed LCCD+SIFT descriptor also obtains large improvement (about $9\%$) over recent result of the DeCAF \cite{decaf2013}, which is one of the most advanced deep learning models. These results convincingly verify the effectiveness of the proposed LCCD. Deep learning models have shown strong capability for image representation. However, the high-level deep features computed via multi-layer feature extraction are highly abstracted. They may lose important local detailed information in fully-connected layers, leading to the lower discrimination of the features.

\begin{figure}
\centering
\includegraphics[scale=0.45]{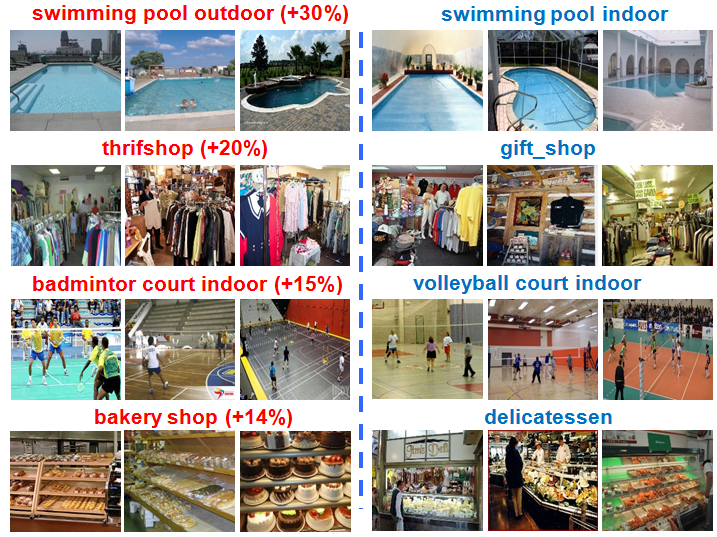}
\caption{Left: Image classes from the SUN397 where the LCCD+SIFT achieves large improvements over SIFT; Right: Image classes which are easily confused with the left ones.}
\label{fig:12}
\end{figure}

\subsection{On the PASCAL VOC 2007 Standards}
\par
We further evaluate the performance of LCCD descriptor on the PASCAL VOC 2007 standards \cite{29} for visual object categorization. The PASCAL VOC 2007 standards \cite{29} is known as one of the most difficult image classification tasks due to large-scale variations in appearance, posture, and even with occlusions, which are often caused by real-world complicated affects. In the table 6, we compare the classification accuracies of related descriptors. All results are achieved with $FV$ encoding, except for color SIFT (cited from \cite{35}). Again, the LCCD+SIFT achieves the highest accuracy at $65.80\%$, improving the performance of individual SIFT (with the FV) with $4\%$. The improvement is more significant than that of the LCS+SIFT descriptor \cite{28}. The advantage of the color contrastive information is obvious again.

\begin{table}[!h]
\caption{Comparison of our results to the state-of-the-art mean accuracies on the  PASCAL VOC 2007 dataset}
\centering  
\renewcommand{\arraystretch}{1.3}
\begin{tabular}{p{3.8cm}|p{1.7cm}<{\centering}|p{2cm}<{\centering}}
\hline
\textbf{Method} & \textbf{Publication} & \textbf{ mAP}($\%$)\\
\hline
Chatfield\textit{et.al.} \cite{25} & BMVC2011  &61.70\\
\hline
Rassakovsky \cite{26} &ECCV2012 &57.20\\
\hline
Kobayashi \textit{et.al.} \cite{19} & CVPR2013 &62.20\\
\hline
Wu \textit{et.al.} \cite{wu2013scale} &CVPR2013  & 64.10 \\
\hline
\hline
SIFT(Jorge \textit{et.al.}) \cite{28} &  IJCV2013 &61.80\\
Color SIFT \cite{35}& TPAMI2010 &42.00\\
LCS+SIFT (Jorge \textit{et.al.}) \cite{28}& IJCV2013 &63.90\\
\hline
\hline
LCCD & -- & 42.45\\
LCCD+SIFT & -- & \textbf{65.80}\\
\hline
\end{tabular}
\end{table}

\subsection{Summary}
We conducted extensive experiments on three benchmarks to evaluate the performance of the LCCD descriptor. The results are  summarized as follows. First, although the performance of single LCCD descriptor is generally not comparable to that of SIFT,  the combination of them consistently improves the performance of individual SIFT substantially in all our experiments. Second, the LCCD+SIFT achieves the-state-of-the-art performance in all three databases. It outperforms recent combination descriptors (e.g. the LCS+SIFT \cite{28} and OPM+SIFT \cite{10}) considerably, indicating that the proposed LCCD provides stronger complementary properties to the SIFT than the others. The LCCD is capable of capturing meaningful local detailed features, which are often discarded by most gradient based descriptors and DCNN models. Third, our computation of color contrast in both spatial locations and multiple channels achieves better performance than current color SIFT and LCS descriptors, demonstrating that our contrastive mechanisms provides a more principled approach for measuring local color information.

\section{Conclusions}
\par
We have presented a simple yet powerful local descriptor, local color contrastive descriptor (LCCD), for image classification. Beyond traditional shape based descriptor, the neural mechanisms of color contrast was introduced to enrich the image representation with color information in multimedia and computer vision communities.  We developed a novel contrastive mechanism to compute the color contrast in both spatial locations and multiple color channels, and successfully applied it for detecting meaningful local structures of the images. We verified its efficiency both theocratically and experimentally, and demonstrated its strong ability to compensate for the SIFT feature for image description. Extensive experimental results show that the proposed LCCD descriptor with SIFT substantially improves the performance of individual SIFT, and achieves the-state-of-the-art performance in three benchmarks, verifying its efficiency convincingly and confidently.

%
%
%
%
%
%
%
%

\bibliographystyle{IEEEtran}
\bibliography{mybibfile}

\end{document}